\let\proof\relax
\let\endproof\relax
\newtheorem{theorem}{Theorem}
\newtheorem{problem}{Problem}
\newtheorem{assumption}{Assumption}
\newtheorem{lemma}{Lemma}
\newtheorem{definition}{Definition}
\newcommand{\reals}{\mathbb{R}}
\newcommand{\N}{\mathcal{N}}
\newcommand{\B}{\mathcal{B}}
\newcommand{\xhat}{\hat{x}}
\newcommand{\cov}{\Sigma_{k}}
\newcommand{\checkx}{\check{x}}
\newcommand{\checku}{\check{u}}
\newcommand{\checkX}{\check{X}}
\newcommand{\checkU}{\check{U}}
\newcommand{\prop}{\pi}
\newcommand{\Prop}{\Pi}
\newcommand{\pb}{\mathbf{b}}
\newcommand{\automaton}{\mathcal{M}}
\newcommand{\tempop}[1]{\mathcal{#1}}
\newcommand{\until}{\,\mathcal{U}}
\newcommand{\word}{\mathbf{\sigma}}
\newcommand{\eventually}{\lozenge}
\newcommand{\globally}{\square}
\newcommand{\sbd}{\text{SiMBA}\xspace}
\newcommand{\sbds}{\text{SiMBAs}\xspace}
\newcommand{\taskplan}{\mathbf{T}}
\title{\LARGE \bf Planning with SiMBA: Motion Planning under Uncertainty for Temporal Goals using Simplified Belief Guides}
\author{Qi Heng Ho, Zachary N. Sunberg, and Morteza Lahijanian%
\thanks{Authors are with the department of Aerospace Engineering Sciences at the University of Colorado Boulder, CO, USA
        {\tt\small \{\textit{firstname}.\textit{lastname}\}@colorado.edu}}
}
\begin{document}
\AddToShipoutPictureBG*{%
  \AtPageUpperLeft{%
    \hspace{16.5cm}%
    \raisebox{-1.5cm}{%
      \makebox[0pt][r]{To Appear in the IEEE Int'l. Conference on Robotics and Automation (ICRA), 2023.}}}}

\maketitle
\begin{abstract}
    This paper presents a new multi-layered algorithm for motion planning under motion and sensing uncertainties for Linear Temporal Logic specifications. We propose a technique to guide a sampling-based search tree in the combined task and belief space using trajectories from a simplified model of the system, to make the problem computationally tractable. Our method eliminates the need to construct fine and accurate finite abstractions. We prove correctness and probabilistic completeness of our algorithm, and illustrate the benefits of our approach on several case studies. Our results show that guidance with a simplified belief space model allows for significant speed-up in planning for complex specifications.
\end{abstract}
\section{Introduction}

% \qh{I did not address: How the reference trajectory affects the controls of the full belief layer... It is too convoluted to explain in a short manner}

As robots become more advanced, the expectation for them to perform tasks with higher complexities increases. 
It is thus an essential challenge to enable efficient planning for complex requirements. In addition, real-world robots must be able to reason about both motion and sensor uncertainty while executing such tasks. For instance, an autonomous underwater rover often has noisy motion due to water currents and gets accurate GPS measurements only at the surface; under the surface sensor readings are highly noisy. The combination of accounting for these uncertainties with the need to provide guarantees for task completion makes the planning problem very difficult. This paper focuses on this challenge and aims to develop an efficient framework for planning under uncertainty with complex specifications.

Linear Temporal Logic (LTL) is a principled formalism for expressing complex temporal tasks for robotic systems \cite{ltlf, Hadas2018Review, Fainekos2005}. For instance, a robot tasked with ``visit A and B in any order, and then go to C while avoiding D" can be expressed precisely as an LTL formula. The primary existing LTL motion planning frameworks for continuous state and action spaces are designed for deterministic systems \cite{Bhatia2010, Maly2013, Luo2021, Plaku2016}.
% In this work, we use LTL to specify our tasks for systems under uncertainty.

To address LTL synthesis problems for systems with uncertainty, works \cite{Wells2020, Ding2012, Luna:WAFR:2015} focus solely on motion uncertainty.  They first abstract the evolution of the robot in the environment into a finite Markov Decision Process (MDP), and then synthesize a policy that maximizes the probability of successfully completing the LTL specification. For observation uncertainty, LTL synthesis on Partially Observable MDPs (POMDPs) have recently been proposed for moderately-sized discrete space problems \cite{Bouton2020}, or through finite state abstraction of continuous spaces \cite{haesaert2018}. However, a common limitation of these works is the need for a finite abstraction from continuous states and actions to discrete ones. There are currently no methods that can give practically useful guarantees for general continuous-space POMDPs.

% Optimization-based methods have also been employed to synthesize controllers for temporal logic specifications for systems under uncertainty \cite{jha2018safe, Sadigh2016RSS}. However, these methods are not fully applicable to many robotics problems since they require a fixed horizon, and the horizon for motion planning problems is usually unbounded. These approaches thus suffer with scalability issues or do not provide task completion guarantees for temporal planning problems that require long horizons. \ml{this paragraph is vague}

% By using a fixed receding horizon approach, one loses guarantees on task completion for motion planning and suffers with scalability for longer horizon problems.

\begin{figure}[t]
    \centering
    \includegraphics[width=0.9\linewidth,  trim={2.5cm 4cm 3.5cm 4cm},clip]{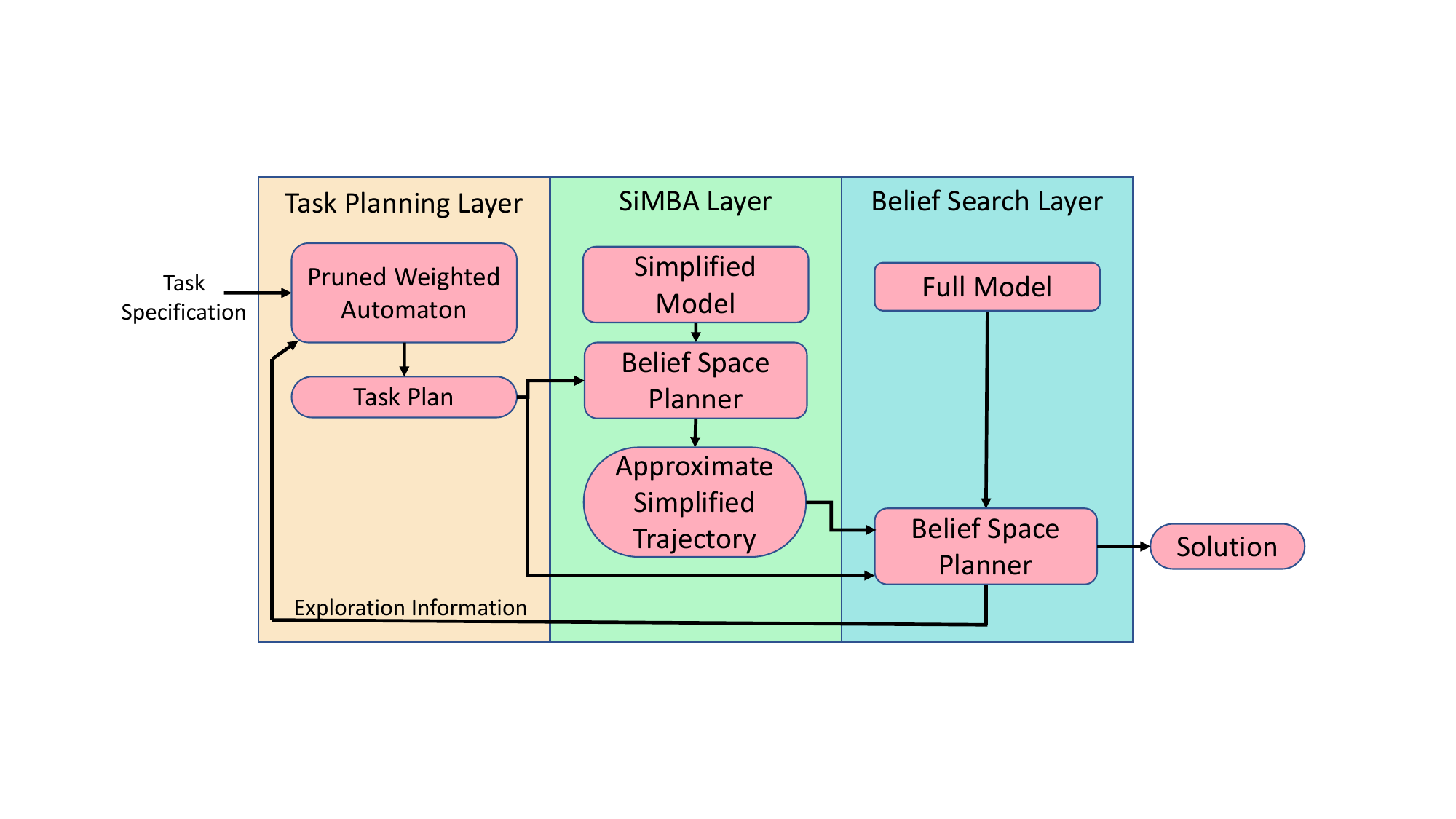}
    \caption{Planning with SiMBA Framework}
    \label{fig:overview}
    \vspace{-6mm}
\end{figure}

For continuous spaces, recent works in motion planning under uncertainty combine the power of sampling-based methods \cite{Kingston2018} with point-wise chance-constraints on the probability of safety \cite{cc-rrt, blackmore2011, cc-rrt*, firm, bry2011rrbt, Ho2022, Zheng2022}. This is a relaxation of maximizing success probability, but it allows for fast and scalable motion planning with robustness guarantees. Most works focus on motion uncertainty \cite{blackmore2011, pairet2021, cc-rrt}, but recent works \cite{bry2011rrbt, Ho2022, firm, Zheng2022} introduce extensions to account for sensor uncertainty. While they provide efficient planning under uncertainty, they are limited to simple task of A-to-B planning. Recent work \cite{Oh2021} extends chance constrained planning to LTL tasks and minimizes the probability of task failure, but their approach is limited to systems under motion uncertainty.
% \ml{improper treatment of the task satisfaction?}

A main challenge in dealing with both uncertainty and complex specifications is the need to simultaneously account for both the belief and task spaces. Doing so results in extremely large search spaces. Principled guidance of a low-level motion tree has been shown to improve tractability in the deterministic setting for both single and multi-temporal goals planning, using layers of planning \cite{Plaku2010syclops, Maly2013, Ho2022CDC, Bhatia2010} or heuristic guidance \cite{Luo2021}. For stochastic systems, \cite{Tajvar2020} proposes a heuristic to guide tree extension, but their framework is only designed for dynamics noise. Extending these techniques to settings with both motion and measurement uncertainty is non-trivial because of the difficulty in constructing a guidance mechanism that captures belief dynamics well.

In this paper, we present a multi-layered framework to synthesize motion plans for LTL tasks for systems under both motion and measurement uncertainty. The LTL tasks are defined in the belief space with a user-defined robustness requirement. We show that our framework provides guarantees on task completion and probabilistic completeness. 

Our approach has two novel characteristics: First, we introduce a method to plan over an automaton that represents the LTL task directly that avoids the need for fine abstraction, by pruning infeasible edges. Second, in order to make the problem computationally tractable, we propose to use a simplified model that accounts for both motion and measurement uncertainty. We refer to this notion as Simplified Models with Belief Approximation (\sbd), which we use to rapidly find a continuous path that heuristically guides the search for a satisfying motion plan for the full system. Our evaluation shows that the addition of \sbd for guidance improves solution time significantly. To the best of our knowledge, this is the first work that uses trajectories from a simplified model in the belief space to bias sampling-based search for stochastic systems under complex tasks.

In summary, the contributions of this paper are five-fold: (i) a framework for planning under LTL specifications for continuous systems under motion and measurement uncertainty with (ii) guarantees on task completion and probabilistic completeness; (iii) planning over a pruned automaton that alleviates the state explosion problem in abstraction-based methods; (iv) improving efficiency by using a simplified model in the belief space to provide continuous trajectory guides for belief planning, and (5) a series of illustrative case studies and benchmarks.
% \zs{Low Priority: These are all different \emph{types} of contribution - perhaps we could make the story clearer to readers with a bit more structure, e.g. "We propose a framework for planning under ... with two novel characteristics: (a) [SiMBA] (b) [pruned automaton]. We show that this framework achieves [guarantees] and demonstrate with illustrative case studies and benchmarks." (the square brackets are places where we'd need to work out the wording.)}

% In this work, we also use biasing towards potential solutions in the task space by using a reduced order belief model of the system. Instead of geometric planning, we use a reduced-order belief model in order to capture the belief topology of the problem manifold for LTL specifications.

% We take advantage of the observation that lead paths that include a coarse belief representation are more informative guides than ones that do not.

% Effective balance between fast inaccurate plans and accurate plans.

% Contributions

\section{Problem Formulation}
\label{sec:problemstatement}

Consider a robot with both motion and sensing uncertainty tasked with a complex navigation task in a bounded workspace $W \subset \reals^d, d \in \{2, 3\}$. We are interested in computing a motion plan for the robot to achieve its task with guarantees. Below, we formalize this problem.

% \ml{do you need this sentence? you repeat it right after Eq. (2)!}
% Let $\mathcal{X} \subset \reals^n$ and $\mathcal{U} \subset \reals^m$ be the state and control spaces, respectively. Then, t

The robot has linear or linearizable motion and measurement models given by:
{\small
\begin{equation}
    \label{eq:system}
    \begin{split}
        &x_{k+1} = Ax_{k} + Bu_{k} + \omega_k, \;\;\;\,\omega_k \sim \N(0, \mathcal{Q}),\\
        &z_k = Cx_k + Du_{k} + \nu_k(x_k),  \;\nu_k(x_k) \sim \N(0, \mathcal{R}(x_k)),
    \end{split}
\end{equation}
}%
where $x_k \in X \subset \reals^n$ is the state, $u_k \in U \subset \reals^m$ is the input, and $z_k \in \reals^p$ is the measurement with their corresponding matrices $A \in \reals^{n\times n}$, $B \in \reals^{n\times m}$, and $C \in \reals^{n\times p}$. Noise terms $\omega_k$ and $\nu_k$ are i.i.d. random variables with zero-mean Gaussian distributions and covariance matrices $\mathcal{Q}$ and $\mathcal{R}(x_k)$, respectively. Note that covariance $\mathcal{R}(x_k)$ depends on state $x_k$, i.e., it is state-varying measurement noise. This describes problems common in robotics where measurements may be available only in parts of the environment or measurement quality depends on the location (state) of the robot.
% \ml{could discuss that the linear dynamics could be obtained by feedback linearization or learning from data...}
% We assume that the robot dynamics is linear or linearizable, \qh{papers that it can be linearized, or learnt from data... etc (Vijay Kumar's paper)}

The evolution of the robotic system in \eqref{eq:system} can be described by a discrete-time Gauss-Markov process \cite{gaussmarkovprocess}. The robot state $x_k$ at each time step is a random variable with a Gaussian distribution, i.e., $x_k \sim b_k = \N(\xhat_k, \cov)$, where $b_k \in \B$ is referred to as the \textit{belief} of the robot state, $\B$ is the \textit{belief space}, and $\xhat_k$ and $\cov$ are the state mean and covariance matrix, respectively. A \textit{belief trajectory} $\pb = b_0b_1b_2\cdots b_f$ is a sequence of beliefs.

% \begin{assumption}[Unsatisfiable Specification]
%     A specification is not satisfiable by any belief trajectory for System~\ref{eq:system} if 
% \end{assumption}

\subsection{Linear Temporal Logic over Finite Traces for Belief Tasks} %\ml{what are belief models? Belief Tasks?  Or Task Description in Belief Space?}

The robot is tasked with a temporal specification over the belief space, similar to \cite{Sadigh2016RSS, haesaert2018, Silva2021, CCTL2018, perceptionuncertaintyLTL}.
Let $\mathcal{P} = \{p_1, \cdots, p_n\}$ be a set of convex
polytopic regions in the state space $X$, i.e., $p_i \subset X$ for all $1 \leq i \leq n$.
Further, let $\alpha_i \in [0,1]$ be a probability threshold (chance constraint) on $p_i$.  Then, we define atomic proposition $\pi_i^{\alpha_i}$ associated with region $p_i$ as follows: $\pi_i^{\alpha_i}$ is true for belief $b_k$ iff $P(x_k \in p_i) > 1 - \alpha_i$. This definition states that a proposition is true if the probability that a belief state is in a region $p_i$ is at least $1 - \alpha_i$.
Hence, the set of atomic propositions is $\Pi = \{\pi^{\alpha_i}_i \mid i = 1, \cdots, n\}$.  
% It defines an \textit{alphabet} $AP := 2^{\Pi}$ where each \textit{letter} $\sigma$ of the alphabet is a subset of $\Pi$. 
% A finite string of letters is a finite word $\word = \sigma_1\sigma_2\cdots \sigma_n$ where $\sigma \in AP$. 
% A labelling function $L : B \rightarrow AP$ maps belief states to letters in the alphabet, such that a belief trajectory generates a word with respect to which system properties can be expressed.

We define a labeling function $L : \B \rightarrow 2^\Pi$ that maps a belief $b \in \B$ to the set atomic propositions in $\Pi$ that are true for $b$. 
Then, a belief trajectory $\pb = b_0\cdots b_f$ generates an observation trace (word) $\word = \word_0 \cdots \word_f$ via the labeling function such that $\word_i = L(b_i)$ for all $0 \leq i \leq f$.  Each $\word_i \in 2^\Pi$ is called a symbol.

\begin{definition}[LTLf Syntax \cite{ltlf}]
 Let $\Pi$ be a set of atomic propositions. Then, an LTLf formula over $\Pi$ is recursively defined by
    \begin{align*}
        \phi := \pi^\alpha \mid \neg \phi \mid \phi \vee \phi \mid \tempop{X} \phi \mid \phi \until \phi
    \end{align*}
where $\pi^\alpha \in \Pi$ is an atomic proposition, $\neg$ (``negation'') and $\vee$ (``or'') are Boolean operators, and  $\tempop{X}$ (``next'') and $\until$ (``until'') are the temporal operators.
\end{definition}

\noindent From this definition, one can derive other standard temporal operators, e.g., the $\eventually$ (``eventually") and $\globally$ (``globally") operators are given by $\eventually \phi \equiv \top \until \phi$ and $\globally \phi \equiv \neg \eventually \neg \phi$.

LTLf formulas are interpreted over finite words in $(2^{\Pi})^*$. The semantics of LTLf can be found in \cite{ltlf}. We say a finite trace $\sigma$ satisfies formula $\phi$, denoted by $\sigma \models \phi$, iff $\sigma, 0 \models \phi$. 
Additionally, we say that a belief trajectory satisfies a specification $\phi$, written $\mathbf{b} \models \phi$, if its observation trace (word) $\word = L(b_0)L(b_1)\cdots L(b_n)$ satisfies $\phi$, i.e., $\word \models \phi$.
Note that, for a small $\alpha$, $\pi_i^\alpha$ requires being in region $p_i$ with a high probability.  
The negation of it, $\neg \pi^{\alpha} = P(x_k \in p_i) \leq 1 - \alpha$, still allows being in region $p_i$ with a relatively high probability for $\alpha \ll  1$. In many applications, however, one may be interested in expressing a task for being in a region with high probability in one part of a formula and then avoiding the same region also with high probability in another part of the formula. Similar to \cite{Nathalie2019}, we can do this by capturing the more intuitive converse of $\pi^{\alpha}_{i}$ by adding new atomic propositions. This can be done by associating to each $p_{i}$ a new atomic proposition $\tilde{\pi}^{\alpha}_{i}$ that is true for belief $b_k$ iff $P(x_k \notin p_i) \geq 1 - \alpha$, such that $\tilde{\pi}^{\alpha}_{i}$ represents $\neg {\pi}^{1 - \alpha}_{i}$.

\subsection{Planning Problem}
\label{sec:belief_propagation_equations}

% \ml{hmmm... This subsection could be the problem. No need for equations (7)-(11). After defining (6), you can define the problem.}

We seek a motion plan that satisfies a given LTLf formula $\phi$. 
% \ml{such a convoluted sentence!}
% The motion plan is characterized by a sequence of nominal control inputs and its corresponding nominal state trajectory such that they produce in the absence of uncertainty, along with a linear feedback controller that seeks to follow this trajectory using online state estimate $\xhat_k$.
Assuming that the robot is equipped with a feedback trajectory-following controller,
a motion plan can then be characterized as a sequence of control inputs coupled with its corresponding nominal state trajectory.
Let $\checkU^{0,t} = (\checku_0, \checku_1, \ldots, \checku_{t-1})$ be a sequence of control inputs. Then, given an 
initial belief $b_0$, a nominal trajectory
$\checkX^{x_0,x_t} = (\checkx_0, \checkx_1, \ldots, \checkx_t)$ is obtained by applying $\checkU^{0,t}$
to the nominal system dynamics $\checkx_{k+1} = A\checkx_{k} + B\checku_k$. This motion plan $(\checkU^{0,t}, \checkX^{x_0,x_t})$ is then executed online, given online state estimates $\xhat_k$, via the stabilizing, trajectory-following controller
\begin{align}
    \label{eq:feedback controller}
    u_k = \check{u}_{k-1} - K (\xhat_k - \checkx_k),
\end{align}
where $K$ is the feedback control gain. 
% We refer to $(\checkU,\checkX)$ as a \textit{motion plan} for System \eqref{eq:system}.

% \qh{The belief trajectory $\pb$ that satisfies the specification is what we are after.}
% Now, given a stabilizing feedback controller in \eqref{eq:feedback controller} and online state estimate $\xhat_k$, the system closed-loop dynamics are
% \begin{align}
%     x_k = Ax_k + B\check{u}_{k-1} - K (\xhat_k - \checkx_k) + w_k\text{.}
% \end{align}

% \subsection{LTLf Planning Problem}

% The formal statement of the problem is as follows:

\begin{problem}
    \label{prob:1}
    Given System \eqref{eq:system}, a set of probabilistic atomic propositions $\Prop = \{\prop^{\alpha_1}_1, \ldots, \prop^{\alpha_n}_n\}$ defined over regions $\mathcal{P}$, LTLf formula $\phi$ defined over $\Prop$, and closed-loop control gain K, find a motion plan $(\checkU, \checkX)$ as a pair of sequence of nominal controls $\checkU = (\checku_0,\ldots,\checku_{k-1})$ for some $k\geq 1$ and its resulting nominal trajectory $\checkX = (\checkx_0,\ldots, \checkx_k)$ such that when executed via the controller in \eqref{eq:feedback controller} the resulting belief trajectory $\pb$ satisfies the LTLf formula, i.e., $\pb \models \phi.$
\end{problem}

% \qh{keep in mind the closed loop dynamics vs execution}

% \ml{should talk about the challenges of this problem and how our approach addresses them}

The main challenge in this problem is that the search space is extremely large. This is because planning must be performed in the composition of the belief space with the task space (representation of the set of all satisfying words). For an $n$-dimensional state space, the Gaussian belief space adds $O(n^2)$ dimensions, and for a task formula of size $|\phi|$, the size of the task can be as large as $2^{2^{|\phi|}}$ \cite{ltlf}. Hence, to make the problem computationally tractable, the search must be performed with informed guidance. Existing methods for guiding search in the state space through, e.g., discrete abstractions or geometric planning, fail to capture belief information, leading to uninformative guides. 
To address these challenges, we aim to design a framework that reduces dependence on fine, discrete abstractions, while efficiently searching in the belief space. Specifically, we seek to generate informed guidance via simplified models.
% By leveraging recent advances in Gaussian belief space planning using sampling-based tree search...

% \begin{problem}
%     \label{prob:1}
%     Given System \eqref{eq:system}, a set of probabilistic atomic propositions $\Prop = \{\prop^{\alpha_1}_1, \ldots, \prop^{\alpha_n}_n\}$, a LTLf formula $\phi$ defined over $\Prop$, and a Lipschitz continuous and monotonic cost function $J: \X \to \reals^+$, find a motion plan $(\checkU, \checkX)^*$ as a pair of sequence of nominal controls $\checkU = (\checku_0,\ldots,\checku_{k-1})$ for some $T\geq 1$ and its resulting nominal trajectory $\checkX = (\checkx_0,\ldots, \checkx_k)$ that minimizes the expected total cost, 
%     \begin{equation}
%         (\checkU, \checkX)^* = \arg\min_{(\checkU, \checkX)}\; \mathbb{E} \Big[ \sum_{k=0}^{T} J(x_{k}) \Big],
%     \end{equation}
%     subject to, when executed via controller in \eqref{eq:feedback controller} the resulting belief trajectory $\pb$ \qh{check wording} satisfies the LTLf formula, i.e.
%     \begin{equation}
%         \pb \models \phi.
%     \end{equation}
% \end{problem}
\section{Belief Space Planning with Simplified Belief Approximations}
\label{sec:beliefplanning}

We present a modular framework to solve Problem~\ref{prob:1}. The framework is inspired by \cite{Plaku2010syclops, Maly2013, pairet2021} and consists of three main layers: task planning layer, \sbd guide layer, and belief search layer, as depicted in Fig. \ref{fig:overview}.

We first translate the LTLf formula $\phi$ into a minimal Deterministic Finite Automaton (DFA). Then, we prune unrealizable transitions of the DFA, and use the pruned DFA for task planning. At each iteration, we find the shortest path on the DFA that corresponds to a $\phi$-satisfying trace to use as a task plan. Next, at the \sbd guide layer, we use a simplified model of the system to rapidly find a path annotated with DFA states that obeys the task plan. We call this technique Simplified Models with Belief Approximation (\sbd). Finally, at the belief search layer, a belief tree searches for a satisfying trajectory for the full system. The growth of this tree is biased to search around the hybrid annotated path while ensuring the motion plan satisfies $\phi$.

% \begin{algorithm}
%     \caption{Belief Space Planning for LTLf}
%     \label{alg:framework}
%     \SetKwInOut{Input}{Input}\SetKwInOut{Output}{Output}
%     \Input{$\phi, \tilde{X}$}
%     \Output{Motion Plan satisfying $\phi$}
%     $\mathcal{A} \leftarrow$ ConstructAutomaton$(\phi)$\\
%     $\automaton \leftarrow$ PruneAutomaton$(A, \mathcal{P})$ \\
%     \While{Time Elapsed $< t_{max}$}
%     {
%         $\taskplan \leftarrow \{(d_i, q_i)\}_{i \geq 0} \leftarrow$ TaskPlanner($\mathcal{P}, t_{max}$)\\
%         $\xi_{\tilde{X}} \leftarrow \sbd()$\\
%         $(\mathbf{v}, \mathbf{u}, \taskplan) \leftarrow$ Explore($\taskplan, \textsc{tr}, \mathcal{P}, X, t_{e}$)\\
%         \If{$(\mathbf{v}, \mathbf{u}, \taskplan) \neq \emptyset$}{
%             \Return $(\mathbf{v}, \mathbf{u}, \taskplan)$
%         }
%     }
%     \Return No Solution
% \end{algorithm}

\subsection{Belief Propagation and Labeling}
\label{sec:beliefpropagationandlabel}
% \ml{isn't this subsection all from CC-RRT paper? Why do we repeat it? Maybe present the ideas at the high-level and refer the reader to the paper for the detials and equations?}
% \qh{Yes,  eq. (13)-(16) are in Blackmore's paper, but I think we should have the equations in this paper because these equations are not explicitly written in the original paper (stay-in regions, not obstacle regions). I think it'd be helpful for readers to see the equations in this paper to see how labelling can be done for polytopic obstacles, or we could also put them in the preliminaries.}

% \ml{maybe change the subsection title to Belief propagation and labeling, move the belief propagation equations here, and briefly talk about how the labels can be generated for each belief.}

Here, we establish the relationship between the evolution of the distribution over robot state, atomic predicates, and labels.
During offline planning, following \cite{bry2011rrbt, Ho2022}, given a stabilizing feedback controller in \eqref{eq:feedback controller} and online state estimate $\xhat_k$, 
% \begin{align}
%     x_k = Ax_k + B(\check{u}_{k-1} - K(\hat{x}_k - \check{x}_k)) + w_k\text{.}
% \end{align}
we parameterize the belief at each time step based on the closed-loop dynamics as
{\small
\begin{align}
    \label{eq:belief}
    x_k \sim b_k = \N(\checkx_k, \cov^+ + \Lambda_k^+ ),
\end{align}
}%
where $\cov^+$ is the covariance that represents the online state estimation error, and $\Lambda_k^+$ is the covariance matrix of $\hat{x}_k$, which is a random variable when planning offline. It 
% represents the uncertainty over possible state estimates that could arise during online execution of the motion plan and is a result of considering all possible measurements received.
accounts for the uncertainty related to the measurements when executed online.
Analogous to the Kalman Filter, covariance matrices $\cov^+$ and $\Lambda_k^+$ can be computed recursively:
{\small
\begin{align}
    \label{eq:belief_propagation_equations}
    \cov^- &= A\Sigma_{k-1}^+A^T + Q,\\
    L_k &= \cov^-C^T(C\cov^-C^T + R(\xhat_k))^{-1},\\
    \cov^+ &= \cov^- - L_kC\cov^-,\\
    \Lambda_k^+ &= (A- BK)\Lambda_{k-1}^+(A-BK)^T + L_kC\cov^- \text{.}\label{eq:beliefpropagation}
\end{align}
}%
Given belief $b_k$, its label $L(b_k)$ is the set of atomic propositions in $\Pi$ that are true in $b_k$, i.e., $\pi_i^{\alpha_i} \in L(b_k) \iff P(x_k \in p_i) > 1 - \alpha,$ where
{\small
\begin{align}
    \textstyle
    \label{eq:belief integral}
    P(x_k \in p_i) = \int_{p_i} \N(s \mid \checkx_k, \cov^+ + \Lambda_k^+ ) ds.
\end{align}
}%
\noindent
Since our regions are convex polytopes that can be described as a conjunction of linear half-spaces, we can 
% provably conservatively 
efficiently compute bounds for \eqref{eq:belief integral}, using methods such as those described in \cite{blackmore2011, cc-rrt, Liu2014, pairet2021}.

\subsection{Task Planning with Pruned Automaton}

At the task planning layer, an automaton is used to define a sequence of sub-tasks the robot has to complete.

\subsubsection{Automaton Construction and Pruning}

An LTLf formula $\phi$ can be translated into a minimized DFA 
% $\automaton$ 
that represents precisely the traces that satisfies $\phi$ \cite{ltlf}. 
% Thus, we first construct $\automaton$, the minimized DFA of the given LTLf formula $\phi$. 
A DFA is defined as a tuple $\automaton = (Q, q_0, S,  \delta, F)$, where $Q$ is a finite set of states, $q_0 \in Q$ is the initial states, $S= 2^\Pi$ is a set of input symbols, $\delta : Q \times S \rightarrow 2^Q$ is the transition function, $F \subseteq Q$ is the set of accepting states.

Instead of planning over a product automaton (Cartesian product of $\automaton$ with a finite abstraction of the robotic system) like in \cite{Oh2021, Maly2013, Bhatia2010, Kantaros2020, Ho2022CDC}, we propose to use the abstraction to prune impossible task sequences of $\automaton$. To do this, we construct an adjacency graph of the regions in $\mathcal{P}$ in order to prune impossible transitions in $\automaton$. 

% \qh{Not sure what the best word is. Trivially is used because not all impossible transitions are removed, but just the ones that are geometrically impossible}\qh{eliminate abstraction usage}

The adjacency graph is constructed by computing adjacency and intersections between the regions in $\mathcal{P}$ as well as the remainder region $X \setminus \cup_{p \in \mathcal{P}}$. Each region is represented as a node in the adjacency graph, and edges of the graph represent regions which are geometrically adjacent to each other or intersecting with one another. 
% We also add an additional node connected to all other nodes, which represents the empty symbol. 
This adjacency graph captures the possible transitions between regions in the continuous state space, and we use it to remove geometrically impossible letters in the alphabet, thereby pruning the edges in $\automaton$ that are not realizable in the state space. 
% \qh{It sounds like you are pruning the adjacency graph by
% pairwise checks between your polytopic regions in the state
% space. Your formulation in (1) allows for coupling between
% states in x (e.g x = (position, velocity). How do you
% actually determine adjacency? Are you checking that there
% exists a u s.t. the transition exists or something? For
% uncoupled states it seems obvious, for coupled states less
% so.
% }

% \ml{well ... not only that... the edges of the adjacency graph also help to remove infeasible edges of the DFA}
% \qh{That is what I'm trying to write here actually. Is that unclear?}

% \qh{Need to handle the case where one region subsumes another region, maybe?}
% \ml{no, if the intersection is defined as its own region/node, then, it's already taken care of.}\qh{I'm thinking of the edge between the added empty symbol node and the subsumed region}

    This approach has two advantages. First, it removes the need for fine abstractions of the problem space and system, which improves scalability by alleviating the state explosion problem. Our approach generalizes the method proposed by \cite{Luo2021}, by providing a way to prune the automaton using information about $\mathcal{P}$. Second, our method gives the belief planners in the subsequent layers the freedom to explore the belief space without restrictions, which is important for belief space planning, especially when the measurement noise is state dependent as in our problem, i.e., $\mathcal{R}(x_k)$. 

Task planning is directly performed on $\automaton$ augmented with edge weights that represent estimates on motion tree feasibility. 
The edge weighting scheme accounts for the fact that some transitions on the adjacency graph
could be geometrically possible  (and thus exist in $\automaton$) but are not realizable by the system's belief dynamics.
These weights are continuously updated during the planning process.

\subsubsection{Task Planning}
In each iteration of task planning, 
% we search the pruned and weighted $\automaton$. By searching $\automaton$,  
we compute 
an accepting run (task plan) $\taskplan$ 
% is computed. 
on the pruned and weighted $\automaton$.
A graph search algorithm, such as $A^*$, is used to find the shortest path to an accepting state in $F$ from the initial state $q_0$. Each task plan is a candidate sequence of automaton states that the algorithm uses to lead 
% \qh{lead and guide are basically the same thing, but guides are already used for \sbd. How do we differentiate better?} 
% a sampling-based motion tree of belief trajectories in 
the search in the
subsequent layers. By finding a belief trajectory that follows $\taskplan$, the robot is guaranteed to satisfy 
% the LTLf specification 
$\phi$.

% \ml{the last statement is not quite true... the probability of being in the regions should be considered since the atomic propositions have $\alpha$ parameter.}\qh{The task plan is a sequence of automaton states, so the probability of being in a region is baked into the word generated because of $L(b_k)$, no?}

To define the feasibility edge weights of $\automaton$, we first assign a weight to each state, similar to \cite{Bhatia2010}.
% For each automaton state $q \in Q$, we assign a weight defined by
For state $q \in Q$,
\begin{align}
    \textstyle
    \label{eq:automatonweights}
    w(q) = \frac{(cov(q) +1)}{DistFromAcc(q) \cdot (numsel(q) +1)^2},
\end{align}
where $cov(q)$ is the number of motion tree vertices associated with $q$, $numsel(q)$ is the number of times $q$ has been selected, and $DistFromAcc(q)$ is the shortest unweighted path from $q$ to an accepting state in $F$. $DistFromAcc(q)$ can be computed using an unweighted graph search algorithm such as Djikstra's Algorithm and can be computed once beforehand. For a minimized DFA, a path to an accepting state always exists from any state. Then, the edge weight between states $q,q' \in Q$ is $w(q,q') = (w(q)\cdot w(q'))^{-1}.$ This weighting scheme promotes search in unexplored areas of the task space and suppresses search in areas where attempts at finding a solution have repeatedly failed.
\subsection{Hybrid Tree Search: Gaussian Belief Trees with DFAs}
\label{sec:gbt}
For belief space planning with both the simplified model and full constrained model, we use Gaussian Belief Trees (GBT) in \cite{Ho2022} as our base belief space planner in a hybrid discrete-continuous tree search planner. The hybrid planner is similar to the frameworks for deterministic LTL planners in \cite{Maly2013,Bhatia2010, Ho2022CDC}, but it directly plans with automaton states instead of product automaton states. Also, it reasons about robot uncertainty while planning. For completeness of presentation, we provide a brief overview of this method.

Given a task plan $\taskplan$, we compute the set of DFA states $Q_T$ that exist in $\taskplan$ and contain at least one tree vertex at every iteration. The hybrid tree search expands in $Q_T$. In each iteration, a DFA state $q$ is sampled from $Q_T$. A tree vertex $\text{v}_s$ is sampled among vertices that have discrete components $q$, and one iteration of GBT is performed to obtain a new vertex $\text{v}_n$. A tree vertex is a tuple $\text{v} = (\check{x}, \Sigma, \Lambda, q)$. An iteration of GBT involves selecting a tree vertex $\text{v}_{sel}$, sampling a valid nominal control $\check{u} \in U$, and propagating the continuous belief components in $\text{v}_{sel}$ to obtain the new belief $(\check{x}_n, \Sigma_n, \Lambda_n)$, according to \eqref{eq:system} and Sec.~\ref{sec:belief_propagation_equations}. The new discrete state component $q_n$ of $\text{v}_n$ is obtained by propagating the automaton with the label of the new belief.

% \zs{Probably too much to do before deadline: this might be easier to parse if it was a psuedocode algorithm.}

%with probability proportional to the weights in $Q$

Vertex $\text{v}_n$ is valid and added to the tree if both its continuous and discrete state components are valid. The continuous component $\text{v}_n.b = (\text{v}_n.\check{x}, \text{v}_n.\Sigma, \text{v}_n.\Lambda)$ is valid if it obeys the constraints of \eqref{eq:system}. The discrete component $\text{v}.q$ is valid if it exists in the DFA. Finally, if $\text{v}_n.q$ is an accepting state, the motion plan $(\check{U}, \check{X})$ and corresponding tree vertices $\mathbf{\text{v}}$ ending with $\text{v}_n$ is returned as a solution.

% Suppose that a new vertex $v_n$ corresponds to a newly reached automaton state that is in the current task plan $\taskplan$, this automaton state is added to $Q$ for exploration in the next iteratons. 

\subsection{\sbd Layer for LTLf Guides}
\label{sec:simba}
% \qh{Need a paragraph to link this guide to just using automaton, and not product automaton}

% \ml{should combine the next 2 paragraphs into one and make is succinct.}
Once a task plan $\taskplan$  is generated, we can directly use the hybrid tree search with GBT on System~\eqref{eq:system} to attempt to satisfy the task sequence constraints, as described above. However, a naive method of tree search is computationally inefficient in the absence of a way to guide the search to promising areas of the large search space. To address this issue, we 
take inspirations from \cite{pairet2021} and
utilize trajectory biasing, and seek to rapidly find a path that satisfies the task plan. We propose to do this by using a simplified motion model of the system and include approximate belief dynamics (\sbd). By simplifying the dynamics while still retaining some belief information, we are able to obtain a calculated trade-off between speed of finding a simplified solution path for guidance and informative guides in the belief space. Below, we formalize this idea for \sbd.

First, we create a lower dimensional state space $\tilde{X}$ which is a subspace of the state space $X$ based on the projection $\tilde{x} = Proj(x)$ 
% \begin{align}
%     \tilde{x} = Proj(x)
% \end{align}
where $Proj: X \rightarrow \tilde{X}$ maps $x \in X$ to $\tilde{x} \in \tilde{X}$. This simplified state space can be arbitrarily chosen, with the only condition being that the new subspace $\tilde{X}$ contains the subspace of all the polytopic regions $\mathcal{P}$, i.e., $p_i \subseteq \tilde{X}$ $\forall p_i \in \mathcal{P}$. If the polytopic regions require all dimensions of the state space, \sbd can still be used, albeit with $\tilde{x} = x$. 
% However, in many applications, the polytopic regions of interest are in subsets of the state space.

Second, we design a simplified motion model and measurement model for $\tilde{x}$. These models can also be arbitrarily chosen, but a general rule is to have simpler models for $\tilde{x}$ than $x$, since the purpose of \sbd is to compute approximate solutions rapidly in order to guide the belief search layer. Some generally applicable examples of simplified models are those with simple first order dynamics and kinematic models. One could also use simplifications such as: (i) geometric planning that ignores uncertainty, (ii)  System~\eqref{eq:system} without the process and measurement noise terms, and (iii) System~\eqref{eq:system} without measurement noise. 

% \qh{Is this paragraph too vague?}\zs{Is there a way to describe the relationship between the $x$ dynamics and the $\tilde{x}$ dynamics? If it can be chosen arbitrarily, why shouldn't I choose SR-71 dynamics to approximate a Prius? Maybe there is not a mathematically precise notion, but the description might be improved with even an imprecise notion.}

% \qh{TODO: General rules for augmented motion model and measurement models; 1st order kinematics? Can reduce it further by choosing maximum velocity... This is usually chosen by the user or based on system constraints.}. % This idea of \sbd is general and can be used for general robotics applications, but we focus our attention on the following. 

For many robotic LTL tasks, the regions of interest are defined in the workspace. Therefore, we seek to design a motion model that generates kinematic trajectories. To do this, we take advantage of the observation that geometric paths can be interpreted as kinematic trajectories with a fixed speed in each direction. Thus, 
% we compute the maximum speed $v_{max}$ of the robot, based on system constraints 
we use the maximum speed $v_{max}$ of the robot. For the $i$th component of state $\tilde{x}$ denoted by $\tilde{x}_{k}^{(i)}$, the motion model becomes:
\begin{align}
    \textstyle
    \tilde{x}_{k+1}^{(i)} = \tilde{x}_{k}^{(i)} + v_{i},
\end{align}
such that $\sum v_{i}^2 = v_{max}^2$. As a simple approximation, we maintain the covariance propagation of the original system using \eqref{eq:belief_propagation_equations}-\eqref{eq:beliefpropagation}. We refer to this \sbd as Simple Belief Approximation-\sbd (SBA-\sbd).

% In this work, we demonstrate the generality/simplicity/applicability of this idea by considering the following example.

% \ml{define the general rules for the simplified model... 1st order kinematics? Can reduce it further by choosing maximum velocity...}\qh{This is what the example was supposed to be for}
% \qh{TODO: Some segue sentences into analyzing what makes a good \sbd}

% \ml{use words in the assumptions to clarify the intuition}
% \qh{TODO, Maybe?: Need to update assumptions based on $\tilde{\pi^{\alpha}}$.}

% \qh{I put two definitions for admissible here, which one sounds better?}
% \ml{how about Def. 5?}
% \begin{definition}[Admissible \sbd]
%     Consider the projection from $X$ to the projected space $\tilde{X}$ for a \sbd. The \sbd is admissible if there always exists a solution using \sbd in $\tilde{X}$ for every problem instance that a solution to Problem~\ref{prob:1} with the original system exists.
% \end{definition}

% \begin{definition}[Admissible \sbd]
%     Consider the projection from $X$ to the projected space $\tilde{X}$ for a \sbd. The \sbd is admissible if for all instances of Problem~\ref{prob:1}, the existence of a solution for the original system implies that a solution exists for \sbd in $\tilde{X}$.
% \end{definition}

\begin{definition}[Admissible \sbd]
    Consider System \eqref{eq:system} with $X$ and its subspace $\tilde{X}$ for a \sbd. The \sbd is admissible if, for every instance of Problem~\ref{prob:1} that admits a solution for System \eqref{eq:system}, a solution also exists for the \sbd in $\tilde{X}$.
\end{definition}

% \qh{TODO by Saturday: Projection of motion plan full dimensional system. If there exists a motion plan for that system, reduced order model will find it in reduced space (because of covariance lower, or collision lower, or $\pi$...)}

% \begin{lemma}[Admissible \sbd]
    % Suppose a feasible solution to a \sbd exists. Then, a feasible solution to Problem~\ref{prob:1} exists if the \sbd is admissible.
% \end{lemma}
% \begin{proof}[Proof Sketch]
%     Use assumptions \ref{as:covariance decrease phi better} and \ref{as: covariance decrease CC better}.
% \end{proof}

Since the purpose of \sbd is to find a coarse solution quickly to guide the belief search layer, it is desirable to use admissible \sbds. However, the algorithm can still be probabilistically complete with an inadmissible \sbd, as we discuss in Sec.~\ref{sec:analysis}.

In many robotics scenarios, low robot uncertainty is more desirable than high uncertainty. The intuition is that large uncertainty leads to higher risks (e.g., collision probability). However, mathematically, when bad regions (e.g., obstacles) are small, large uncertainty can reduce risk. To rule out such unintuitive cases, we make the following assumption, which we also use to build our \sbd methodology.

\begin{assumption}
    \label{as:covariance decrease phi better}
    If $b_a = \mathcal{N}(\check{x}, \Sigma_a)$, $b_b = \mathcal{N}(\check{x}, \Sigma_b)$, and $\Sigma_a < \Sigma_b$, then 
    \begin{itemize}
        \item for all $\alpha < 0.5$, $\pi^{\alpha} \in L(b_b) \implies \pi^{\alpha} \in L(b_a)$, and
        \item for all $\alpha > 0.5$, $\pi^{\alpha} \notin L(b_b) \implies \pi^{\alpha} \notin L(b_a)$
    \end{itemize}
    % for all $\alpha < 0.5$, $\pi^{\alpha} \in L(b_b) \implies \pi^{\alpha} \in L(b_a)$, and
    % for all $\alpha > 0.5$, $\pi^{\alpha} \notin L(b_b) \implies \pi^{\alpha} \notin L(b_a)$.
\end{assumption}

Assumption \ref{as:covariance decrease phi better} states that increasing the covariance does not increase the probability of being in a region at a given state mean for low $\alpha$, which corresponds to `reach' regions. Additionally, increasing the covariance does not increase the probability of not being in a region at a given state mean for high $\alpha$, which corresponds to `avoid' regions. 
% These assumptions may be violated for problems with very small regions and systems with high covariances, but is reasonable for many problems in robotics since we do not generally want to increase uncertainty in order to satisfy chance constraints. This assumption is not required for the correctness of our proposed framework, but we use this intuition to build our \sbd guiding methodology.

\begin{lemma}
    SBA-\sbd is admissible under Assumption \ref{as:covariance decrease phi better}.
\end{lemma}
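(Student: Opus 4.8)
The plan is to take any solution of Problem~\ref{prob:1} for System~\eqref{eq:system} and turn it, step by step, into a solution of the same problem instance for the SBA-\sbd model living in $\tilde X$. Let $(\checkU,\checkX)$ be such a solution, inducing the belief trajectory $\pb=b_0\cdots b_f$ with $b_j=\N(\checkx_j,\cov^+ + \Lambda_j^+)$, whose word $\word=L(b_0)\cdots L(b_f)$ satisfies $\phi$. I would first record the one fact that makes the projection harmless: since each region satisfies $p_i\subseteq\tilde X$, i.e.\ $p_i$ constrains only the coordinates retained by $Proj$, we have $x\in p_i\iff Proj(x)\in p_i$, so the integral in \eqref{eq:belief integral} depends on $b_j$ only through the marginal of $b_j$ on $\tilde X$. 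Consequently the label $L(b_j)$ is a function of the projected mean $\tilde\checkx_j:=Proj(\checkx_j)$ and the marginal covariance alone.

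Next I would build the SBA-\sbd nominal trajectory. Because $Proj$ is a coordinate projection it is non-expanding, and each one-step displacement of System~\eqref{eq:system} has norm at most the robot speed $v_{max}$; hence $\|\tilde\checkx_{j+1}-\tilde\checkx_j\|\le v_{max}$ for all $j$, so the SBA-\sbd kinematic model can move from $\tilde\checkx_j$ to $\tilde\checkx_{j+1}$ in a single step with velocity $v=\tilde\checkx_{j+1}-\tilde\checkx_j$ (padding with a component orthogonal to the $\tilde X$-motion if one insists on $\sum v_i^2=v_{max}^2$ exactly, which does not change the mean sequence). This yields an SBA-\sbd motion plan with the very same sequence of nominal means $\tilde\checkx_0,\ldots,\tilde\checkx_f$, and since the state-dependent measurement noise $\mathcal{R}(\cdot)$ is a function of the workspace coordinates it takes the same values along this plan as along $\checkX$.

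It then remains to compare the two belief trajectories. Propagating the SBA-\sbd belief with the equations of Sec.~\ref{sec:beliefpropagationandlabel} gives covariances $\tilde\Sigma'_j$ at $\tilde\checkx_j$; I would show these are comparable to the full-system marginal covariances $\tilde\Sigma_j$ in the Loewner order (the simplified dynamics differ only in how uncertainty is propagated and contracted, not in the information gathered, so the discrepancy can be controlled). Whenever $\tilde\Sigma'_j$ is no larger than $\tilde\Sigma_j$, Assumption~\ref{as:covariance decrease phi better} applies at step $j$: every reach proposition ($\alpha<0.5$) in $L(b_j)$ is also in $L(b'_j)$, and every avoid proposition ($\alpha>0.5$) absent from $L(b_j)$ is absent from $L(b'_j)$. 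Writing $\phi$ in negation normal form with reach atoms appearing positively and avoid atoms via their converses $\tilde\pi^\alpha$, a routine structural induction on $\phi$ shows that this one-sided relabelling preserves satisfaction, so $L(b'_0)\cdots L(b'_f)\models\phi$ and the constructed SBA-\sbd plan solves the instance.

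The main obstacle is the covariance comparison in the last step: the full model and SBA-\sbd use different $A$, $B$, $K$, so $\tilde\Sigma'_j\preceq\tilde\Sigma_j$ is not automatic, and it is precisely here that Assumption~\ref{as:covariance decrease phi better} does the work of converting ``the simplified belief is different'' into ``the simplified belief is no worse for any atom occurring in $\phi$''; if a clean Loewner bound is unavailable one instead pushes the bound atom-by-atom through \eqref{eq:belief integral} using Assumption~\ref{as:covariance decrease phi better} directly. A minor secondary point, already noted above, is reconciling the single-step reachability $\|\tilde\checkx_{j+1}-\tilde\checkx_j\|\le v_{max}$ with the model's $\sum v_i^2=v_{max}^2$, which is handled by an orthogonal padding velocity and does not affect the generated word.
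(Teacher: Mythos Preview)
Your plan mirrors the paper's proof sketch closely: project a full-system solution to $\tilde X$, show SBA-\sbd can follow the same sequence of nominal means, argue its covariance is no larger, and then invoke Assumption~\ref{as:covariance decrease phi better}. The paper compresses the covariance comparison into the single claim that SBA-\sbd ``is able to under-approximate the uncertainty of the belief of the original system at any $\tilde x$, since it is a less constrained model''; you have correctly identified this step as the crux and flagged that a clean Loewner bound is not automatic, which is an honest reading of where the real work lies. Your explicit structural induction over $\phi$ in negation normal form is a useful sharpening the paper does not spell out, but note that it tacitly assumes low-$\alpha$ atoms occur positively and high-$\alpha$ atoms negatively (or via $\tilde\pi$); this matches the intended usage in all the paper's examples but is not actually stated as a hypothesis of the lemma, so strictly speaking both proofs rely on an implicit monotonicity assumption about how $\phi$ uses its atoms. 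One minor slip: your orthogonal-padding fix for the constraint $\sum v_i^2=v_{max}^2$ does not work as written, since SBA-\sbd lives entirely in $\tilde X$ and there is no orthogonal direction to absorb the excess speed; you would instead relax the equality to $\le$ or insert intermediate waypoints, neither of which changes the substance of the argument.
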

\begin{proof}[Proof Sketch]
    Starting from the same $\tilde{x}_0$ and the same covariance, SBA-\sbd is able to under-approximate the uncertainty of the belief of the original system at any $\tilde{x}$, since there are less constraints on its dynamics and it uses the maximum speed of the original system. From Assumption~\ref{as:covariance decrease phi better}, a lower covariance at each state leads to better probability of satisfaction. There exists at least as many accepting belief trajectories for \sbd as for the full system.
\end{proof}
Finally, the goal is to find a feasible solution for the \sbd augmented version of Problem~\ref{prob:1} in each iteration of the planning framework. Using the hybrid search tree in
Sec.~\ref{sec:gbt}, we obtain a kinematic trajectory for the simplified space $\tilde{X}$. A \sbd path is then the hybrid path $\xi_{\tilde{X}} = (\mathbf{\tilde{x}}, \mathbf{q})$ which is a sequence of pairs of $\tilde{x}$ and $\automaton$ state $q$. 

We note that designing simplified models can generally be difficult, but our results show that using SBA-SiMBA significantly reduces planning time for various LTLf tasks.

\subsection{Belief Search Layer}
\label{sec:fullplanning}

% \qh{unclear how exactly
% a reference SiMBA trajectory guides the sampling in the
% full model,}

The belief search layer plans for the original constrained System~\eqref{eq:system} using GBT to find a solution for Problem~\ref{prob:1}. To guide the search, we lift \sbd path $\xi_{\tilde{X}}$ from $\tilde{X}$ to $X$ and use biased sampling around the lifted trajectory set.

% We leverage results from \cite{pairet2021}, which shows that biasing sampling around a previously computed kinematic trajectory enables finding solutions faster while maintaining probabilistic completeness. Different from their method which biases search based on a continuous trajectory for single goal planning, a \sbd path is a hybrid trajectory of both kinematic states $\tilde{x}$ and automaton states $q$. Biasing search naively by disregarding automaton states is counter-productive, as many segments of a \sbd path are irrelevant for a tree node with any given automaton states, especially for problems with more complex specifications.

% We leverage results from \cite{pairet2021}, which shows that biasing sampling around a previously computed kinematic trajectory enables finding solutions faster while maintaining probabilistic completeness. Different from their method which biases search based on a continuous trajectory for single goal planning, a \sbd path is a hybrid trajectory of both kinematic states $\tilde{x}$ and automaton states $q$. Biasing search naively by disregarding automaton states is counter-productive, as many segments of a \sbd path are irrelevant for a tree node with any given automaton states, especially for problems with more complex specifications.

We propose to guide the belief tree search by using the \textit{sub-task relevant} segments of $\xi_{\tilde{X}}$.
Let $\text{v}$ be an existing tree node in the belief search layer with discrete component $\text{v}.q_i$, and let $q_{i+1}$ be the successor of $q_i$ in task plan $\mathbf{T}$.
Then, we use the segment of $\xi_{\tilde{X}}$ with discrete components $q_i$ and those that lead to a transition from $q_i$ to $q_{i+1}$, 
i.e., the segment is of the form $\xi^{q_i}_{\tilde{X}} = (\tilde{x}_k, q_i)...(\tilde{x}_{k+m}, q_i)(\tilde{x}_{k+m+1}, q_{i+1})$, where $k$ is the first time step $\xi_{\tilde{X}}$ encounters $q_i$ and $m \geq 0$.

The sampling bias technique is as follows. Given a starting radius $d$ which is incrementally increased, samples are chosen within $d$ radius of $\xi^{q_i}_{\tilde{X}}$ with probability $pr$, and uniformly in the belief space with probability $1-pr$. Biasing sampling in this way allows the belief search tree to bias growth towards promising parts of the belief space found by the \sbd layer while still allowing exploration.

At each iteration, both \sbd guide and belief search layers are given a user-defined time bound to extend the belief tree. The intuition is to continually feedback feasibility information from the tree search layers to the task planning layer. Weights on $\automaton$ are updated based on \eqref{eq:automatonweights} and the algorithm continues to the next iteration.
\section{Theoretical Guarantees}
\label{sec:analysis}
Here, we analyze the theoretical guarantees of our proposed framework.
% \qh{Check this assumption for $\neg O$ type of problems}

\begin{theorem}[Correctness]
    \label{theorem:correctness}
    Let $\text{V}^* = (\text{v}_0, \cdots, \text{v}_f)$ be a returned solution trajectory from the belief search layer. Then, the observation trace of its continuous component is guaranteed to satisfy $\phi$, i.e., $\sigma \models \phi$.
\end{theorem}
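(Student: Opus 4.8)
The plan is to show, by induction along the returned path in the belief search tree, that the discrete components $q_0,\ldots,q_f$ carried by $\text{v}_0,\ldots,\text{v}_f$ constitute the run of the DFA $\automaton$ on the word $\sigma = L(b_0)L(b_1)\cdots L(b_f)$ generated by the belief trajectory $\pb = b_0\cdots b_f$ associated with the continuous components of $\text{V}^*$, and that this run ends in an accepting state; correctness of the LTLf-to-DFA translation then gives $\sigma \models \phi$ directly.

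First I would fix notation: for each $i$, write $\text{v}_i = (\checkx_i, \Sigma_i, \Lambda_i, q_i)$ and let $b_i$ be the belief $\N(\checkx_i, \Sigma_i^+ + \Lambda_i^+)$ reconstructed from the continuous part via the propagation equations of Sec.~\ref{sec:beliefpropagationandlabel}, so that the $i$-th symbol of the generated word is $\sigma_i = L(b_i)$. For the base case, the belief tree is rooted at the initial belief with $q_0$ the initial DFA state (or, depending on the indexing convention, the state reached from it after reading $L(b_0)$), which starts a valid run of $\automaton$. For the inductive step, by the description of a GBT iteration in Sec.~\ref{sec:gbt}, whenever a child vertex $\text{v}_n$ is created from a parent $\text{v}_{sel}$ its discrete component is set by ``propagating the automaton with the label of the new belief'', i.e. $q_n = \delta(\text{v}_{sel}.q, L(b_n))$, and $\text{v}_n$ is retained only if this $q_n$ is a valid DFA state. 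Restricting to the root-to-$\text{v}_f$ path $\text{V}^*$, this yields $q_i = \delta(q_{i-1}, \sigma_i)$ for all $i\ge 1$, so $q_0 q_1 \cdots q_f$ is precisely the run of $\automaton$ on $\sigma$.

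Next I would invoke the termination condition: the belief search returns a trajectory only when its last vertex satisfies $\text{v}_f.q = q_f \in F$ (Sec.~\ref{sec:gbt}). Together with the previous step this says that $\automaton$ accepts the finite word $\sigma$. Since $\automaton$ is built from $\phi$ to accept exactly the finite traces satisfying $\phi$ \cite{ltlf}, acceptance of $\sigma$ is equivalent to $\sigma \models \phi$, which is the claim. One detail that deserves an explicit sentence: the search actually runs on the \emph{pruned} automaton, obtained from $\automaton$ by deleting edges and never adding any; hence every run realizable in the pruned automaton is also a run in $\automaton$, and acceptance in the pruned automaton implies acceptance in $\automaton$, so the argument is unaffected by pruning.

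I do not anticipate a genuine mathematical obstacle; the argument is an invariant-tracking exercise grounded in the GBT construction and the DFA acceptance condition. The one place that needs care is making the correspondence between ``the discrete component stored in a vertex'' and ``the DFA run on the word actually generated by the trajectory'' watertight — in particular, that the label used to advance the automaton during tree growth coincides with the symbol the generated word records at that step (immediate, since both are $L(b_n)$ for the same belief $b_n$), together with fixing the indexing convention at the root and the pruning remark above. Everything else follows directly from the definitions.
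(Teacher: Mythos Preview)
Your invariant-tracking argument on the automaton side is sound and more explicit than the paper's sketch: you correctly identify that the discrete components stored in the tree vertices are, by construction, the DFA run on the word read so far, that termination happens only at an accepting state, and that pruning only removes edges and hence cannot create spurious accepting runs. On this front you are more careful than the paper.

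However, you gloss over the one point the paper's own proof sketch actually spends its effort on. You write that the coincidence between the label used to advance the automaton and the symbol $L(b_n)$ of the generated word is ``immediate''. The paper does not treat this as immediate: the integral in \eqref{eq:belief integral} is not evaluated exactly during planning but via conservative bounds on the chance constraints (the methods cited after \eqref{eq:belief integral}), and the covariance $(\Sigma_k^+ + \Lambda_k^+)$ must be argued to be computed exactly by the recursions of Sec.~\ref{sec:beliefpropagationandlabel}. The paper's proof sketch invokes both facts to conclude that the label the algorithm assigns at each step is the true $L(b_k)$ (or a conservative one, which suffices for soundness). Your proposal would be complete if you added a sentence making this explicit rather than declaring it immediate; otherwise there is a hidden assumption that the numerical label evaluation is exact, which it is not.
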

\begin{proof}[Proof Sketch]
    $\text{V}^*$ is only returned as a solution if $\text{v}_f.q \in F$. Given polytopic regions $\mathcal{P}$, the computation of chance constraints for labels in \eqref{eq:belief integral} is conservative. Since the belief covariance $(\Sigma$ + $\Lambda)$ is computed exactly, the label is conservatively computed, i.e., $L(b_k)$ is correct. Thus, the belief components of $\text{V}^*$ (belief trajectory $\mathbf{b}$) correctly returns an accepting run on $\automaton$, so $\mathbf{b}$ satisfies $\phi$, i.e., $\mathbf{b} \models \phi$.
\end{proof}

% \begin{proof}[Proof Sketch]
%     A tree node is valid if its label causes a transition that is valid in the automaton. Given polytopic regions $\mathcal{P}$, the computation of chance constraints for labels in \eqref{eq:belief integral} is conservative. Since the belief covariance $(\Sigma$ + $\Lambda)$ is computed exactly (with no additional assumptions on the measurements obtained), the label $L(b_k)$ is conservatively computed, i.e., $L(b_k)$ is correct. $V^*$ is only returned as a solution if $v_f.q \in F$. The belief components of $V^*$ (belief trajectory $\mathbf{b}$) returns an accepting run on $\automaton$, which accepts precisely the traces that satisfies $\phi$, i.e., $\mathbf{b} \models \phi$.
% \end{proof}

% Let $T = (V, U)$ be the tree obtained from the algorithm for some iterations $N \in \mathbb{N}$. 

%Additionally, observe that the covariance $(\Sigma$ + $\Lambda)$ provides the distribution over possible covariances. 

% \begin{corollary}
%     Given a returned solution tree trajectory $V^* = (v_0, \cdots, v_T)$, the generated word of its continuous component is guaranteed to satisfy $\phi$, i.e., $\sigma \models \phi$.
% \end{corollary}

% with its associated controller $U^* = (u_0, \cdots, u_{t-1})$

\begin{theorem}[Probabilistic Completeness]
    Given an admissible \sbd, our framework is probabilistically complete.
\end{theorem}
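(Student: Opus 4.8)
The plan is to decompose probabilistic completeness of the full framework into three claims that compose: (i) the task-planning layer selects every ``useful'' task plan infinitely often across iterations; (ii) for each selected task plan, the \sbd guide layer returns, with probability one, a guide path obeying that task plan; and (iii) the biased Gaussian Belief Tree in the belief search layer, when fed such a guide, retains the probabilistic completeness of the unbiased GBT of \cite{Ho2022}. Since Problem~\ref{prob:1} admits a solution, by Theorem~\ref{theorem:correctness} (more precisely, by the conservativeness of the label computation used there) that solution induces an accepting run on $\automaton$ that traverses only geometrically realizable transitions, so it survives the pruning step; let $\taskplan^\star$ denote (a prefix of) this run. Combining (i)--(iii), $\taskplan^\star$ is chosen infinitely often, and on each such selection there is a fixed positive lower bound on the probability that the belief tree is extended toward a $\phi$-satisfying leaf; a Borel--Cantelli type argument then yields that a solution is found with probability one as the iteration count tends to infinity.

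For (i), I would argue from the finiteness of $\automaton$ and the structure of the weights in \eqref{eq:automatonweights}: $DistFromAcc(q)$ is finite for every state lying on some accepting path, and $w(q)$ decays monotonically in $numsel(q)$, so no state on a path to $F$ can be starved --- after finitely many selections of competing states its relative weight dominates and the $A^\star$ search over $\automaton$ routes a task plan through it. Hence every task plan corresponding to a realizable accepting run, in particular $\taskplan^\star$, is returned infinitely often. For (ii), admissibility of the \sbd is the crucial hypothesis: since the full instance of Problem~\ref{prob:1} has a solution, the \sbd instance on $\tilde{X}$ has one, and in fact one whose region labels follow $\taskplan^\star$, because the polytopes $\mathcal{P}$ lie in $\tilde{X}$ and labels are determined entirely by them. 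The \sbd guide layer runs the same hybrid GBT search of Sec.~\ref{sec:gbt} on a lower-dimensional, less-constrained model, so it is itself probabilistically complete; given unboundedly many iterations, each with a fixed time budget, it returns a guide path $\xi_{\tilde{X}}$ obeying $\taskplan^\star$ with probability one.

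For (iii), the sampling rule draws uniformly from $\B$ with probability $1-pr>0$ at every step, and otherwise from a $d$-ball around the sub-task--relevant segment $\xi^{q_i}_{\tilde{X}}$ of the guide, with $d$ increased over time; because a biased sampler that still places positive probability mass on every open subset of $\B$ inherits the expansiveness / attraction-sequence conditions underlying completeness of GBT, the biased belief tree remains probabilistically complete, and the guide only accelerates convergence. Putting the three pieces together gives the theorem.

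I expect the main obstacle to be making (iii) precise in the \emph{coupled, finite-budget} setting: the guide $\xi_{\tilde{X}}$, the selected task plan, and the radius $d$ all change across iterations, and each iteration gives the tree only a bounded amount of time, so one cannot simply invoke completeness of a single infinitely long GBT run --- one must show that the attraction sequence of balls along the true solution belief trajectory is hit with a probability bounded below \emph{uniformly} over iterations. This requires (a) importing from \cite{Ho2022} the exact robustness/clearance condition under which GBT is complete (e.g.\ existence of a satisfying belief trajectory with strictly-satisfied chance constraints and positive margin), which is available here because the label integral \eqref{eq:belief integral} is evaluated conservatively and Assumption~\ref{as:covariance decrease phi better} pins down the effect of the covariance $(\Sigma+\Lambda)$; and (b) checking that restricting the bias to the segments $\xi^{q_i}_{\tilde{X}}$ never removes positive sampling probability from a neighborhood the solution must pass through, which follows from the persistent $1-pr$ uniform component. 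A secondary subtlety is that $\tilde{X}$ may be a strict subspace, so the lift of $\xi_{\tilde{X}}$ to $X$ is set-valued; one must confirm the biasing is performed around the lifted set and that the uniform component still covers the complementary directions --- which is also why an admissible \sbd (rather than an arbitrary one) is assumed in the statement.
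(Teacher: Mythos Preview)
Your decomposition into (i) every realizable accepting run is selected infinitely often via the weight updates in \eqref{eq:automatonweights}, (ii) the admissible \sbd guide layer succeeds by GBT completeness on the simplified instance, and (iii) the biased belief-search tree inherits GBT completeness through the persistent $1-pr$ uniform sampling component, is essentially the same argument the paper sketches. Your treatment is in fact more careful than the paper's, which simply invokes the weight updates, the definition of admissible \sbd, and the probabilistic completeness of GBT from \cite{Ho2022} without explicitly addressing the finite-budget-per-iteration or biased-sampling subtleties you flag.
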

%, i.e., it finds a solution if one exists almost surely as time approaches infinity.
% This is independent of the choice of SiMBA.

\begin{proof}[Proof Sketch]
    This follows from Thm~\ref{theorem:correctness}, the definition of admissible \sbd, and the results of \cite{Ho2022}. $\automaton$'s edge weights are continually updated according to feasibility estimates, so every accepting run in $\automaton$ will be sampled infinitely often in the limit. In the limit, the search space will be covered by infinitely dense trees for both \sbd guide and belief search layers, from the probabilistic completeness of GBT. From Thm~\ref{theorem:correctness}, any returned solution is guaranteed to be correct.
\end{proof}

% \begin{proof}[Proof Sketch]
%     This follows from \ref{theorem:correctness}, the definitions of admissible \sbd, and the results of \cite{Ho2022}.  Since automaton edge weights are updated at each iteration, every accepting run in $\automaton$ will be sampled infinitely often in the limit. For each accepting run, the search space will be covered by infinitely dense trees for both \sbd guide and belief search layers, following the probabilistic completeness of GBT. In the limit, there will be an infinitely dense hybrid tree in the search space of Problem~\ref{prob:1}.  From Theorem~\ref{theorem:correctness}, any solution output for this tree is guaranteed to be correct. 
% \end{proof}

Note that we can still achieve probabilistic completeness with an inadmissible \sbd, by setting a separate time bound to the \sbd layer and conducting belief search without a \sbd guide if no solution is found by the \sbd layer. However, admissible \sbd guides can greatly reduce computation times, as seen in our evaluation.
\section{Evaluations}
We demonstrate the effectiveness of our proposed framework, through benchmarks and illustrative case studies. 
% We first illustrate the efficiency of our approach in comparison benchmarks. Then, we demonstrate the algorithm in a simulated drone delivery scenario.

\begin{table*}[ht!]
    \centering
    %  The success rate is the number of solutions computed within the time limit, and the mean time and standard error are reported. For unsuccessful runs, we use the time limit $120s$ as the time taken.
    \caption{\small Benchmarking results for underwater inspection scenario. We report the mean time to solution and standard error of the mean. Runs are given a maximum time of $120s$, with successful runs finding a solution within $120s$.
    The best scores are shown in bold fond. 
    }
    \label{tab:benchmarking}
    \scalebox{0.9}{
    \begin{tabular}{l|c|c|c|c|c|c|c|c|c|c}
    \toprule
    \multirow{2}{*}{Algorithm} & \multicolumn{2}{c|}{$\phi_1$} & \multicolumn{2}{c|}{$\phi_2$} & \multicolumn{2}{c|}{$\phi_3$} & \multicolumn{2}{c|}{$\phi_4$} & \multicolumn{2}{c}{$\phi_5$}\\
    % \hline
    & Succ. (\%) & Time (s) & Succ. (\%) & Time (s) & Succ. (\%) & Time (s) & Succ. (\%) & Time (s) & Succ. (\%) & Time (s)\\\hline
    Abs-based& $0$ & NA & $0$ & NA & $0$ & NA & $0$ & NA & $0$ & NA\\
    Simba-free& $\mathbf{100}$ & $7.7 \pm 0.8$ & $66$ & $71.6 \pm 4.3$ & $63$ &  $78.7 \pm 4.0$ & $14$ & $114.1 \pm 1.7$ & $4$ & $117.7 \pm 1.2$\\
    Geo-\sbd& $\mathbf{100}$ & $6.9 \pm 0.6$ &  $89$ & $42.9 \pm 3.6$ & $81$ &  $59.3 \pm 4.2$ & $39$ & $103.5 \pm 2.6$ & $32$ & $107.3 \pm 2.4$\\
    SBA-\sbd& $\mathbf{100}$ & $\mathbf{5.8 \pm 0.5}$ & $\mathbf{95}$ & $\mathbf{29.3 \pm 2.9}$  & $\mathbf{92}$ &  $\mathbf{34.9 \pm 3.2}$ & $\mathbf{73}$ & $\mathbf{85.4 \pm 3.0}$ & $\mathbf{54}$ & $\mathbf{94.5 \pm 3.1}$\\
    \bottomrule
    \end{tabular}
    }
    \vspace{-3mm}
\end{table*}

% \subsection{Simulation Benchmarks}
\textbf{Simulation Benchmarks: }
We compare our proposed framework with (i) a state-abstraction based approach (Abs-based), similar to extending \cite{Maly2013} to the belief space by using a belief space planner, and (ii) using our framework but without a \sbd layer as an ablation study, and using the full framework with different \sbds, (iii) a geometric \sbd that plans directly in the state space (Geo-\sbd), and (iv) SBA-\sbd which includes belief approximation as discussed in Sec.~\ref{sec:simba}.

The scenario is a simplified underwater cave inspection with obstacles as depicted in Figure~\ref{fig:environment1}. Here, the robot can receive measurements with small noise about its state near the surface (in white), while it has to rely on noisier IMU otherwise. The robot has dynamics given by $\dot{x} = v \cos{(\theta)}, \dot{y} = v \sin{(\theta)}, \dot{\theta} = \omega, \dot{v} = a$.
% \begin{align*}
%     \dot{x} = v \cos{(\phi)}, \quad
%     \dot{y} = v \sin{(\phi)}, \quad \dot{\phi} = \omega,  \quad \dot{v} = a\text{,}
% \end{align*}
% with control input $u = (\omega, a)^T$. 
We use state feedback linearization to obtain a linear closed-loop model as in \cite{pairet2021, unicycle}. We consider the following increasingly complex LTLf formulas, with $a = \pi^{0.95}_a$ for region \textsc{a} (in green), $b = \pi^{0.95}_b$ for region \textsc{b} (in yellow), $c = \pi^{0.95}_c$ for region \textsc{c} (in white), and $o = \tilde{\pi}^{0.95}_o$ for region \textsc{o} (brown and black obstacles): 

% $\phi_1 = \globally \neg o \wedge \eventually a$, 
% $\phi_2 = \globally \neg o \wedge \eventually(a \wedge \eventually c)$, 
% $\phi_3 = \globally \neg o \wedge \eventually(a \wedge \eventually c) \wedge \eventually(b \wedge \eventually c)$, 
% $\phi_4 = \globally \neg o \wedge \eventually(a \wedge \eventually(c \wedge \eventually(a \wedge \eventually c)$, 
% $\phi_5 = \globally \neg o \wedge \eventually\big(a \wedge \eventually (c \wedge \phi_3)\big) \wedge \eventually\big(b \wedge \eventually (c \wedge \phi_3)\big)$. %\eventually(\phi_2 \wedge \tempop{X} \phi_2)$ \qh{This is written incorrectly}

\begin{itemize}
    \item $\phi_1 = \globally \neg o \wedge \eventually a$
    \item $\phi_2 = \globally \neg o \wedge \eventually(a \wedge \eventually c)$
    \item $\phi_3 = \globally \neg o \wedge \eventually(a \wedge \eventually c) \wedge \eventually(b \wedge \eventually c)$
    \item $\phi_4 = \globally \neg o \wedge \eventually(a \wedge \eventually(c \wedge \eventually(a \wedge \eventually c)$
    \item $\phi_5 = \globally \neg o \wedge \eventually\big(a \wedge \eventually (c \wedge \phi_3)\big) \wedge \eventually\big(b \wedge \eventually (c \wedge \phi_3)\big)$ %\eventually(\phi_2 \wedge \tempop{X} \phi_2)$ \qh{This is written incorrectly}
    
\end{itemize}

For each specification, $\globally \neg o$ refers to obstacle avoidance.
$\phi_1$ 
% is a single goal motion planning problem, with the goal of reaching region 
requires to eventually reach region
$a$.
$\phi_2$ is a sequential goal problem, with the goal of reaching region $a$ followed by region $c$.
$\phi_3$ states that the robot should visit regions $a$ and $b$ in any order, before surfacing to region $c$.
$\phi_4$ states that the robot should visit regions $a$ followed by $c$ twice.
$\phi_5$ states that the robot should satisfy $\phi_3$ twice sequentially.

For belief tree planning, we used \cite{Ho2022} with RRT. We provided a time limit of $120s$ to find a solution for 100 trials. The results are summarized in Table~\ref{tab:benchmarking}. From the benchmarking results, it is evident that the state-abstraction approach does not work for belief space problems. This is due to the need to capture measurements affecting the belief dynamics in the abstraction. Note that it may be possible to combine abstraction-based approaches with a biased sampling paradigm, but it is not clear how to do so effectively. We also attempted to use belief discretization instead of state discretization, but it also performed poorly since many transitions in the abstraction are impossible.

Our results show that as complexity of specifications increases, so does the benefit of our planning scheme and the importance of \sbd guides. Even for reach-avoid task $\phi_1$, our planning scheme with both types of \sbds show better performance compared to the other methods, demonstrating the general efficacy of this approach for belief space planning with LTLf specifications. Furthermore, the addition of belief approximation in \sbd allows for more informative guides that reason about the belief space, leading to better solution times for SBA-\sbd as compared to Geo-\sbd. 
Typical solution trajectories returned by our proposed algorithm are shown in Figure~\ref{fig:environment1}, where the robot intelligently navigates close to the surface to localize (reduce uncertainty) before submerging and completing its tasks in the cave, through which measurements have higher noise values.

% \begin{figure}
%     \centering
%     \includegraphics[width=0.5\linewidth]{}
%     \caption{Sea Example Environment}
%     \label{fig:sea_example}
% \end{figure}

\begin{figure}[t]
    \centering
    \begin{subfigure}[b]{0.23\textwidth}
            \centering
            \includegraphics[width=\linewidth, , trim={7cm 2.5cm 6cm 2cm},clip]{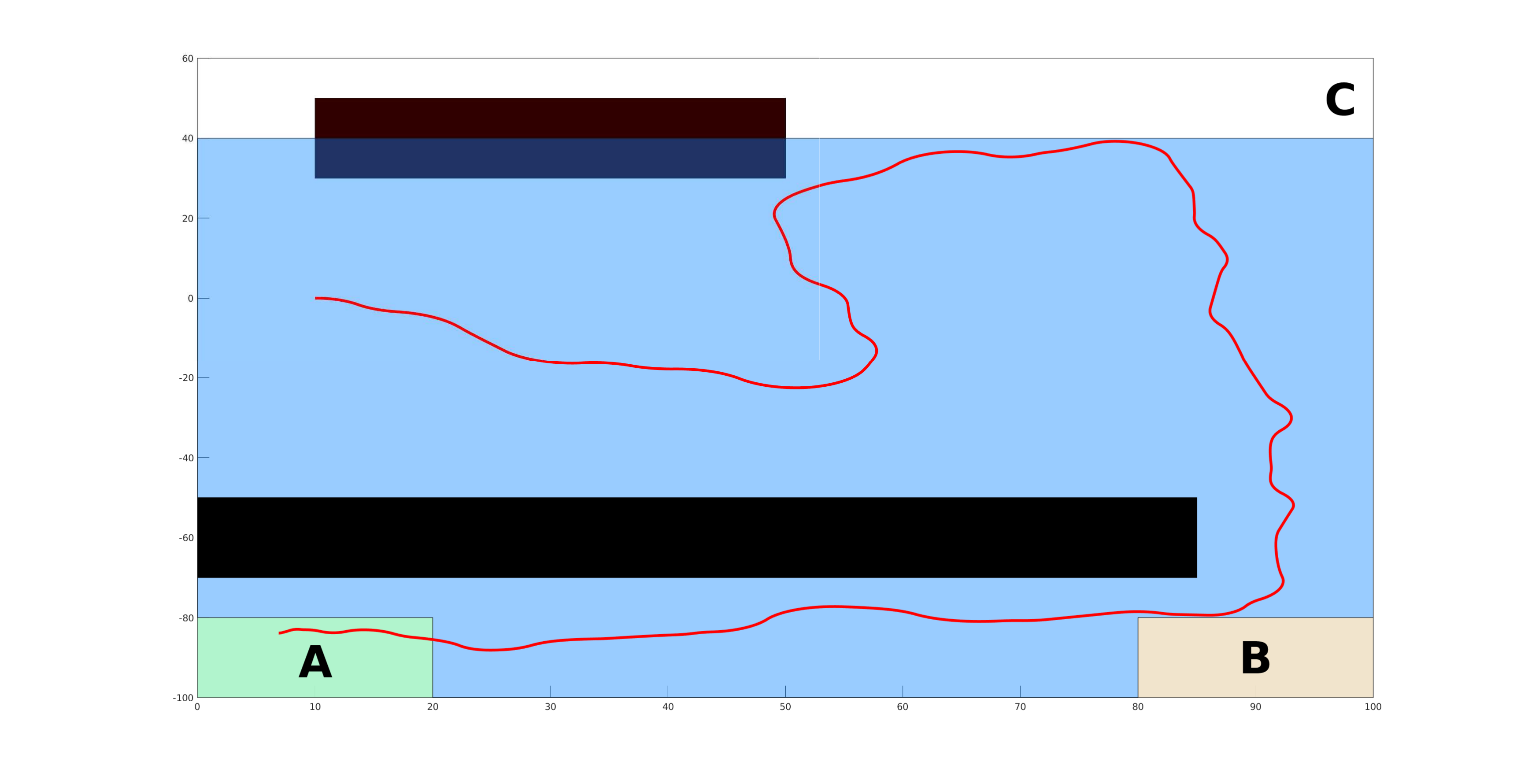}
            % \caption[$\phi_1$.]{$\phi_1$}
            % % {{\footnotesize }}    
            % \label{fig:phi1}
        \end{subfigure}
        \begin{subfigure}[b]{0.23\textwidth}  
            \centering 
            \includegraphics[width=\linewidth, trim={7cm 2.5cm 6cm 2cm},clip]{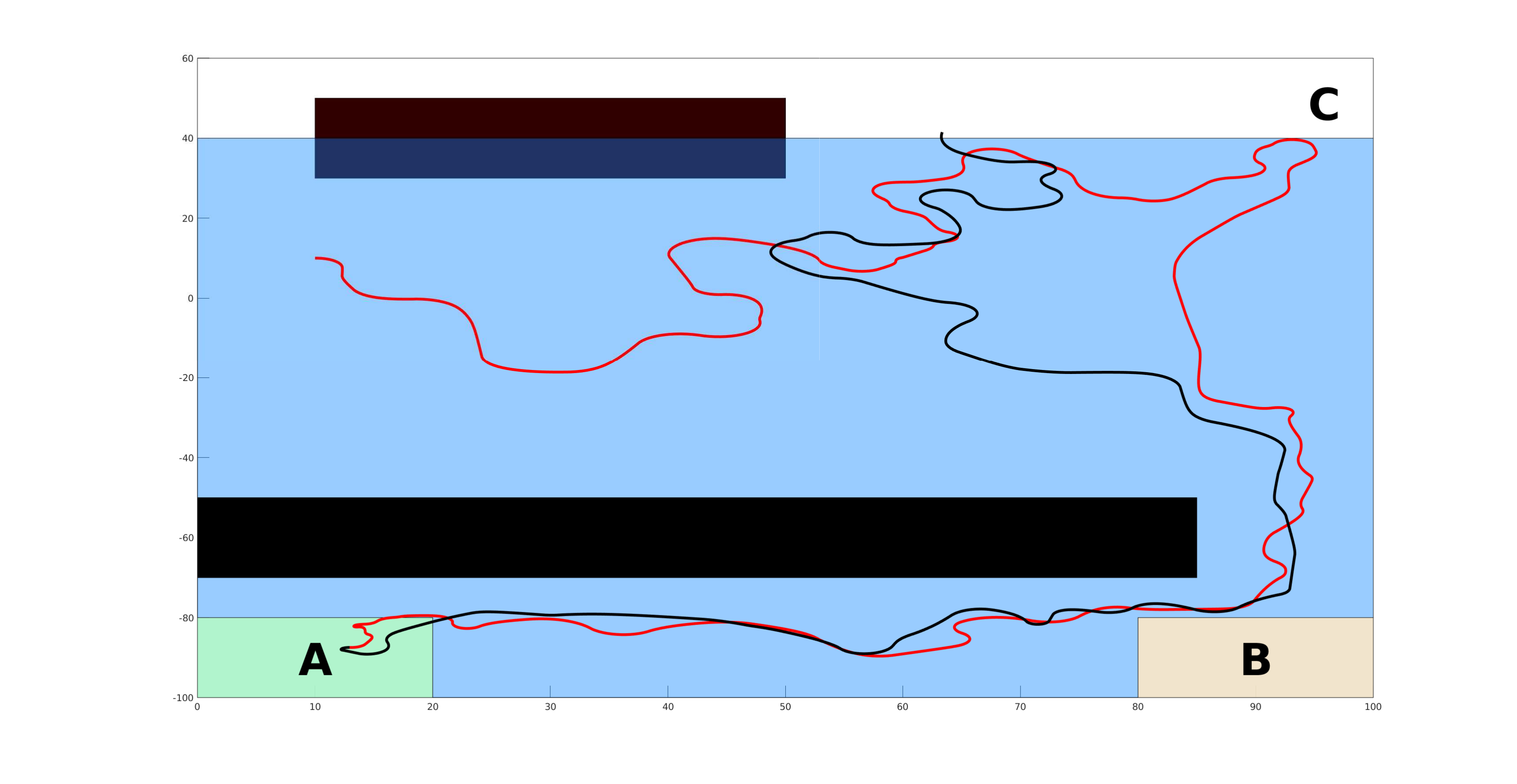}
            % \caption[$\phi_2$.]%
            % {$\phi_2$}
        \end{subfigure}
        \caption[Effect of varying $p_{bias}$ on solution length with 10 seconds of planning time.]
        {\small Underwater inspection. $\phi_1$ (left): the robot moves close to the surface to localize before completing the task. $\phi_2$ (right): the robot completes sub-task $\eventually a$ (red) and nested sub-task 
        $\eventually c$ (black).
        }  
            \label{fig:environment1}
        \vspace{-3mm}
\end{figure}

% \begin{subfigure}[b]{0.22\textwidth}
%         \centering
%         \includegraphics[width=\linewidth, trim={7cm 2.5cm 7cm 2cm},clip]{figures/environment1.pdf}
%         \caption[Environment 1.]%
%         {{\footnotesize Sea Cave Exploration}}    
%         \label{fig:sea example}
% \end{subfigure}

\begin{figure}[t]
    \centering
     \begin{subfigure}[b]{0.22\textwidth}  
            \centering 
            \includegraphics[width=\linewidth, trim={7cm 2.5cm 7cm 2cm},clip]{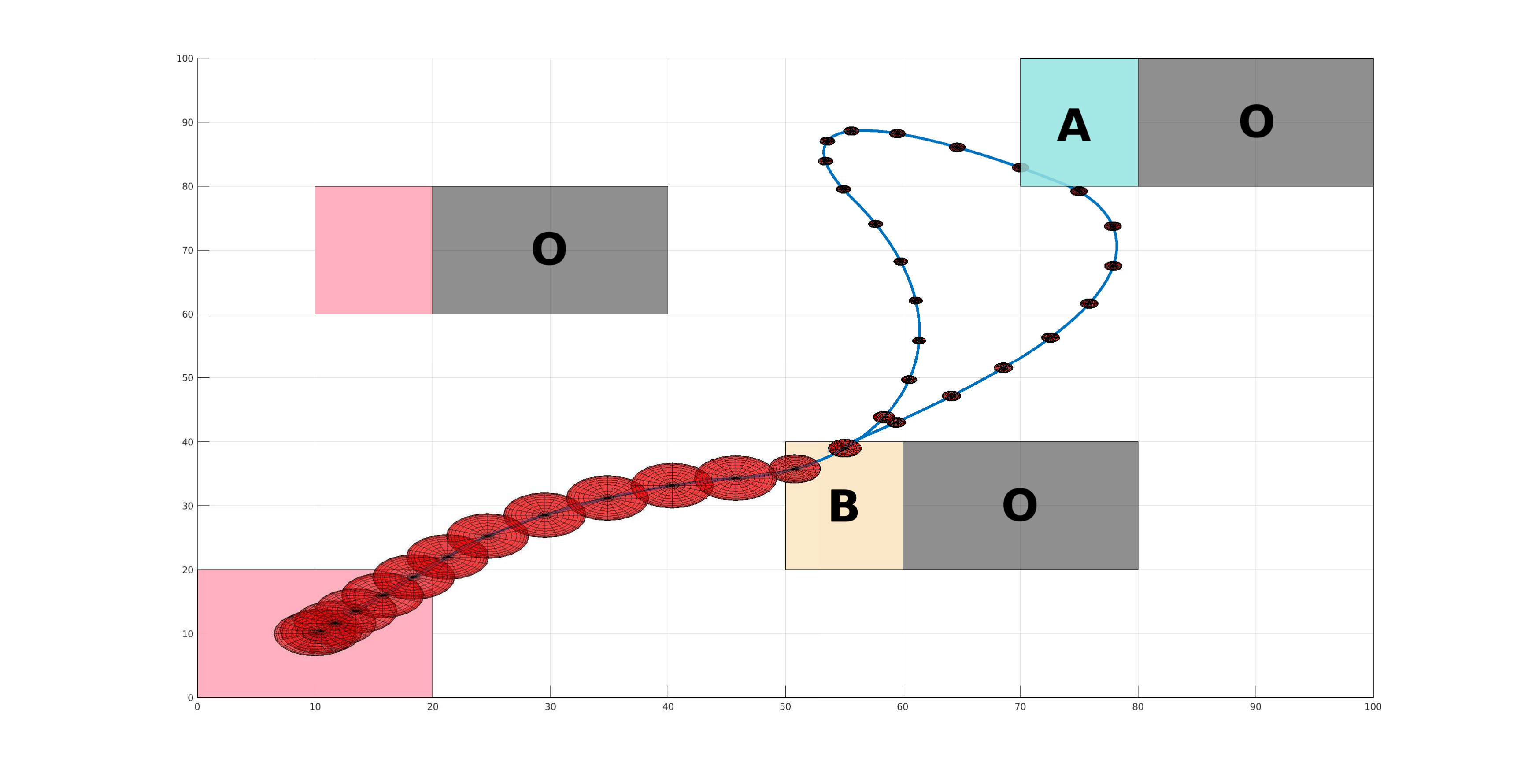}
            % \caption[$\phi_2$.]%
            % {{\footnotesize Top Down View}}    
            % \label{fig:drone example}
        \end{subfigure}
        \begin{subfigure}[b]{0.23\textwidth}  
            \centering 
            \includegraphics[width=\linewidth, trim={7cm 2.5cm 7cm 2cm},clip]{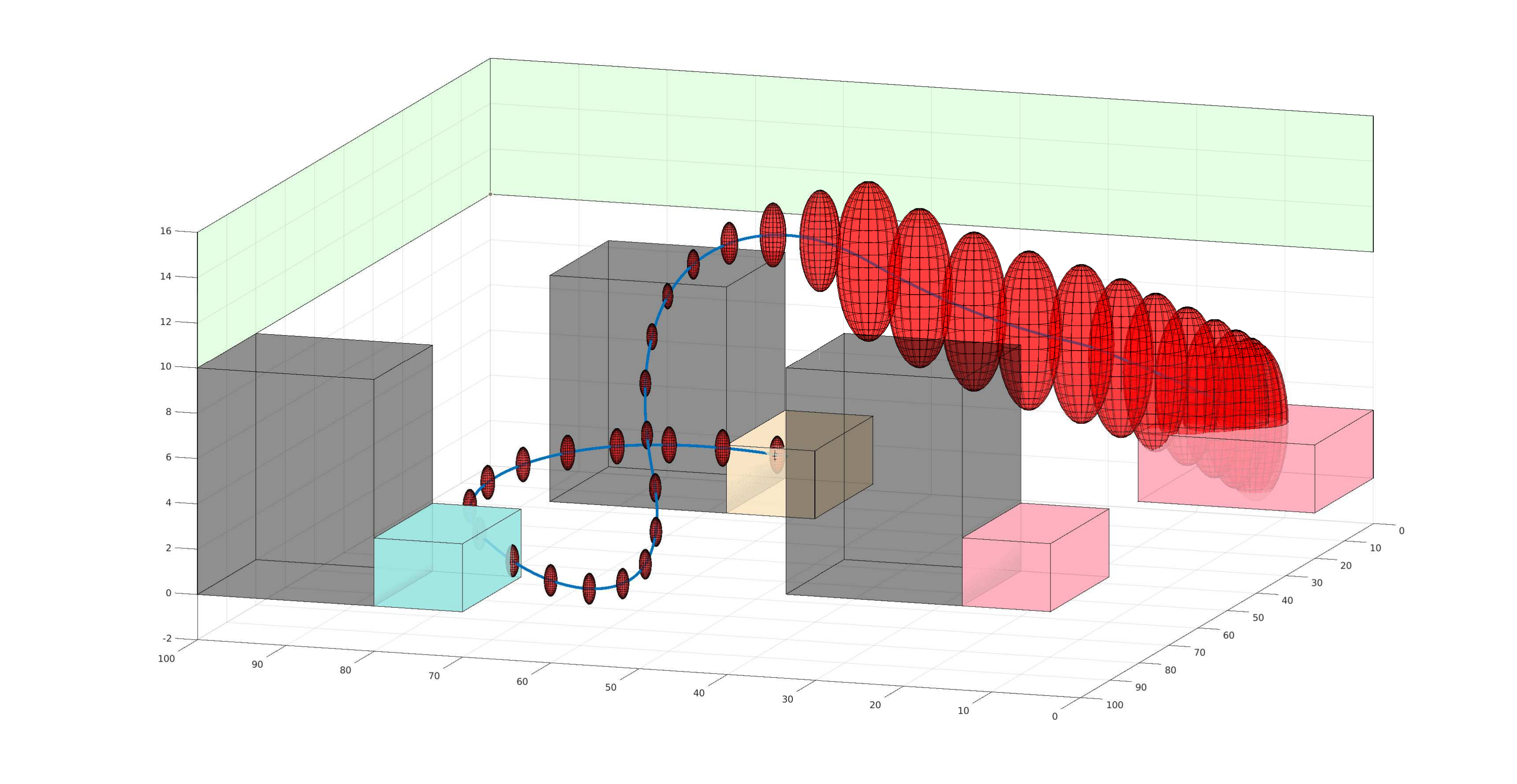}
            % \caption[$\phi_2$.]%
            % {{\footnotesize Side View}}    
            % \label{fig:drone example}
        \end{subfigure}
        \caption[Environments. ]
        {\small Drone delivery, top down (left) and side (right) views.
        }
        \label{fig:environment2}
        \vspace{-5mm}
\end{figure}

\textbf{Urban Drone Delivery: }
Next, we demonstrate our algorithm in a simulated drone delivery problem in an urban environment, as depicted in Figure~\ref{fig:environment2}.
% \qh{Can remove this equations and link to Eric's paper if space constrained}
% \begin{align*}
%     \dot{x} = v \cos{(\phi)}\cos{(\theta)}, \quad
%     \dot{y} = v \sin{(\phi)}\cos{(\theta)}, \\
%     \dot{z} = v\sin{(\theta)}, \quad \dot{\phi} = \omega,  \quad \dot{\theta} = q\text{.}
% \end{align*}
% with control input $u = (v, \omega, q)^T$. The Gaussian process describing the UAV's motion model is learnt from data following \cite{pairet2021}. 
Details on the dynamics can be found in Appendix~A.2 of \cite{pairet2021}.
We consider a delivery problem, where UAV has to pick up a package at the foot of a building A (region \textsc{a} in blue), and deliver it to foot of building B (region \textsc{b} in yellow) while avoiding obstacles (in gray). Additionally, it can only receive GPS measurements if it flies above the gray buildings (in light green), but it cannot fly that high after picking up a package. In LTLf, this task is $\phi_6 = \globally \neg o \wedge \eventually(a \wedge \eventually b) \wedge \globally (a \implies \globally \neg c)$, with $a = \pi^{0.99}_a$, $b = \pi^{0.99}_b$, $c= \tilde{\pi}^{0.99}_c$, and $o= \tilde{\pi}^{0.99}_{o}$. We conducted $100$ trials for this problem with time limit of $120s$, and our algorithm with SBA-\sbd found a solution with $90\%$ success rate, with mean time to solution of $30 \pm 4.1$. A typical solution is shown Figure~\ref{fig:environment2}. The UAV first flies above the buildings 
% as an information gathering action 
to reduce its uncertainty, before flying to region \textsc{a} and then \textsc{b}, to ensure that its uncertainty is low enough to reach both regions with high probability.

\section{Conclusion and Future Work}

We presented a modular framework for motion planning with complex tasks for systems under both motion and measurement uncertainty. We showed that simplified belief models provide important information for guiding motion tree search for planning in belief space. Our proposed framework is correct by construction and probabilistically complete. Empirical evaluations demonstrate the efficiency and efficacy of the planner. Future work aims at analyzing how to design accurate simplified models for better guides.

% , and extending this framework to work with general nonlinear systems. 
% Belief abstraction methods

% Maybe combination of different simbas? Maybe keep previous iteration simbas?

% Possible to look at better ways to do belief abstraction methods.
%%%%%%%%%%%%%%%%%%%%%%%%%%%%%%%%%%%%%%%%%%%%%%%%%%%%%%%%%%%%%%%%%%%%%%%%%%%%%%%
% \addtolength{\textheight}{-2cm}   % This command serves to balance the column lengths
                                  % on the last page of the document manually. It shortens
                                  % the textheight of the last page by a suitable amount.
                                  % This command does not take effect until the next page
                                  % so it should come on the page before the last. Make
                                  % sure that you do not shorten the textheight too much.

% \newpage
\bibliographystyle{IEEEtran}
\bibliography{bib}
\end{document}